\newtheorem{definition}{Definition}
\newtheorem{lem}{Lemma}
\newtheorem{thm}{Theorem}
\newcommand{\norm}[1]{\lVert #1 \rVert}
\newcommand{\N}{\mathbb{N}}
\newcommand{\R}{\mathbb{R}}
\newcommand{\D}{\mathcal{D}}
\newcommand{\E}{\mathbb{E}}
\begin{document}

\title{On some theoretical limitations of Generative Adversarial Networks}

\author{Oriol, Benoît \\
        \texttt{benoit.oriol@polytechnique.edu} \\
       \'Ecole Polytechnique\\
       Palaiseau, 91120, France \\
       \and
       Miot, Alexandre \\
       \texttt{alexandre.miot@sgcib.com}\\
       Soci\'et\'e G\'en\'erale\\
       La D\'efense, 92800, France}
\twocolumn[
  \begin{@twocolumnfalse}
\maketitle
\begin{abstract}
Generative Adversarial Networks have become a core technique in Machine Learning to generate unknown distributions from data samples. They have been used in a wide range of context without paying much attention to the possible theoretical limitations of those models. Indeed, because of the universal approximation properties of Neural Networks, it is a general assumption that GANs can generate any probability distribution. Recently, people began to question this assumption and this article is in line with this thinking. We provide a new result based on extreme value theory showing that GANs can't generate heavy tailed distributions. The full proof of this result is given.
\end{abstract}
\end{@twocolumnfalse}
]

\section{Introduction}
The universal approximation property of neural networks (see \cite{Hornik1989} and \cite{Cybenko1989}) might make us assume that GANs can simulate any distribution from a Gaussian prior. However, neural networks, as functions are by design almost everywhere differentiable functions with bounded derivatives to limit exploding gradients phenomenons (see \cite{Pascanu2012}). By Rademacher (see \cite{Heinonen2005} for a proof) and mean value theorems, this is nearly equivalent to say that neural networks functions are Lipschitz continuous.
This fact basically sets the limitations of GANs to express any probability distribution given a Gaussian prior. The are numerous definitions of the concept of "fat", "longed" or "heavy" tailed distributions. They are usually not equivalent but all convey a sense of having a larger probability of being "big" compared to a Gaussian or Exponential distribution. Here we focus on two possible ways to define the concept. One, similarly to \cite{Seddik2020}, is focussing on finite samples and relies on classical concentration inequalities. The other is asymptotic and uses Extreme Value Theory to prove a new theorem in the continuity of the theoretical work of Huster et al. in \cite{Huster2021} and the experimental approach of \cite{Feder2020}.\\

\underline{Notations}, in the following, we make use of the following notations:
\begin{itemize}
    \item[-] $f$ is a Lipschitz function $\R^n \mapsto \mathbb{R}$, $||f||_l = \sup_{x,y \in \R^n, x\neq y} \frac{||f(x) - f(y)||}{||x-y||}$ its semi-norm,
    \item[-] $\gamma_n$ the Gaussian measure on $\R^n$,
    \item[-] $d$ the Euclidean distance in $\R^n$ i.e. $d(x,y)= ||y-x||$,
    \item[-] for any set $S\subset\R^n$ the $\epsilon$ neighbourhood $S_{\epsilon}= \{x\in\R^n \text{ such that } d(x, S)<\epsilon\}$, where $\epsilon > 0$,
    \item[-] $\bar{A}$ the complement of a subset $A\subset\R^n$,
    \item[-] $M$ the median of a mapping $\R^n \mapsto \R$ for the $\gamma_n$ measure i.e. $\gamma_n(\{f\geq M\})\geq \frac{1}{2}$ and $\gamma_n(\{f\leq M\})\geq \frac{1}{2}$,
    \item[-] $X$ a standard Gaussian random variable in $\R^n$,
    \item[-] $\bar{\Psi} = \frac{1}{\sqrt{2\pi}}\int_{x}^{+\infty} e^{-u^2/2} du$ the Gaussian tail function.
\end{itemize}
\section{Limitations through sub-gaussianity}
In this section we prove that given a Lipschitz function and a Gaussian prior $X$, $f(X)$ is sub-gaussian: a GAN with a Gaussian prior can only generate sub-gaussian distributions.
\begin{definition}
    A real valued random variable $Y$ is said to be sub-gaussian if it satisfies one of the two following equivalent properties :
    $$\Updownarrow 
    \begin{array}{l}
        \exists K\in\mathbb{R},\forall t\geq 0, P(\left|Y\right|\geq t) \leq 2 e^{-\frac{t^2}{K^2}} \\
        \exists K'\in\mathbb{R},\forall p \geq 1, ||X||_{L^p} \leq K' \sqrt{p}\text{ .}
    \end{array}
    $$
\end{definition}
For a proof see \cite{Vershynin2018}.

If $G=f(X)$, where $f$ is Lipschitz, by Lipschitz continuity and $\E(X)=0$
$$
\forall x\geq 0\quad  \mathbb{P}(|G-f(0)|\geq x) \leq  \mathbb{P}\left( |X|\geq \frac{x}{||f||_l} \right)\text{ .}
$$
In particular if $X$ is one dimensional then using a standard upper bound of the gaussian tail function $G$ will be sub-gaussian as a sum of two independant sub-gaussian functions, considering $f(0)$ as a constant random variable. \\

If $X$ was $n$ dimensional then,
$$
\mathbb{P}\left( |X|\geq \frac{x}{||f||_l} \right)=\mathbb{P}\left( |X|^2\geq \frac{x^2}{||f||_l^2} \right)\text{ ,}
$$
and $|X|^2$ following a $\chi^2$ distribution with $n$ degrees of freedom the sub-gaussianity of $G$ would seem to be dependent on the dimension of $X$. Yet, this is not the case as stated by the following remarkable result:

\begin{thm}[Gaussian concentration theorem \cite{Tsirelson1976}, \cite{Sudakov1978} and \cite{Borell1975}]\label{thm:gaussianconcentration}
    Let $X$ be a standard gaussian random variable on $\mathbb{R}^n$ and $f$ a Lipschitz function then $f(X) - \E(f(X))$ is sub-gaussian. More precisely,
    \begin{equation*}
        \forall \epsilon>0, \mathbb{P}( | f(X) - \E(f(X)) | \geq \epsilon ) \leq 2 e^{- 2 \epsilon^2 / ||f||_l^2}\text{ .}
    \end{equation*}
\end{thm}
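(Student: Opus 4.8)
The plan is to prove the Gaussian concentration theorem via the classical route through the Gaussian isoperimetric inequality, which is the cleanest way to obtain a dimension-free constant. The key object is the median $M$ of $f$, not the mean; I would first establish concentration around the median and then transfer to the mean at the cost of adjusting constants. The strategy has three stages: state the isoperimetric inequality, deduce concentration around the median using the Lipschitz property to control $\epsilon$-neighbourhoods, and finally pass from median to mean.

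First I would recall the Gaussian isoperimetric inequality in the form most useful here: among all sets $A \subset \R^n$ of a given Gaussian measure, half-spaces minimise the Gaussian measure of the $\epsilon$-neighbourhood $A_\epsilon$. Concretely, if $\gamma_n(A) \geq \frac{1}{2}$, then $\gamma_n(A_\epsilon) \geq \bar{\Psi}^{-1}$ applied appropriately, i.e.\ $\gamma_n(A_\epsilon) \geq 1 - \bar{\Psi}(\epsilon)$, since a half-space of measure $\frac12$ has $\epsilon$-neighbourhood of measure $1-\bar\Psi(\epsilon)$. I would apply this with $A = \{f \leq M\}$, which has measure at least $\frac12$ by definition of the median. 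The crucial Lipschitz step is the inclusion $A_\epsilon \subset \{f \leq M + \|f\|_l\,\epsilon\}$: any point within distance $\epsilon$ of a point where $f \leq M$ has $f$-value at most $M + \|f\|_l \epsilon$ by the Lipschitz bound. Combining, $\gamma_n(f > M + \|f\|_l \epsilon) \leq \bar\Psi(\epsilon)$, and symmetrically for the lower tail using $\{f \geq M\}$.

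Writing $t = \|f\|_l\,\epsilon$ and using the standard tail bound $\bar\Psi(u) \leq \frac12 e^{-u^2/2}$, this yields the one-sided estimate $\mathbb{P}(|f(X) - M| \geq t) \leq 2 e^{-t^2/(2\|f\|_l^2)}$ around the median. The last stage is to replace $M$ by $\E(f(X))$. I would bound $|M - \E(f(X))|$ by integrating the median-concentration tail: $|\E(f(X)) - M| \leq \E|f(X) - M| = \int_0^\infty \mathbb{P}(|f(X)-M| \geq t)\,dt$, which is $O(\|f\|_l)$ and in fact small enough that, after reabsorbing it into the exponent, one recovers a clean bound of the stated form $2 e^{-2\epsilon^2/\|f\|_l^2}$.

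The main obstacle I anticipate is the bookkeeping in this final median-to-mean transfer, specifically matching the precise constant $2$ in the exponent $e^{-2\epsilon^2/\|f\|_l^2}$ rather than the weaker $\frac12$ that falls out naively. Getting the sharp constant typically requires either a more careful isoperimetric argument or a direct semigroup/interpolation approach (differentiating along the Ornstein--Uhlenbeck flow and applying an integration-by-parts identity). I expect the cleanest path to the stated constant is to avoid the median detour entirely and instead use the Herbst argument: bound the logarithmic moment generating function $\log \E e^{\lambda(f(X)-\E f(X))}$ via the Gaussian logarithmic Sobolev inequality, obtaining $\log \E e^{\lambda(f - \E f)} \leq \tfrac{\lambda^2 \|f\|_l^2}{2}$, and then optimise the resulting Chernoff bound over $\lambda$. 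I would keep the isoperimetric argument as the conceptual backbone but note that reconciling it with the exact factor $2$ is where the real care is needed.
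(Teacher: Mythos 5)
Your backbone is exactly the paper's route: the paper proves the same key lemma (take $A=\{f\le M\}$, use the Lipschitz property to get $A_\epsilon \subset \{f < M + \|f\|_l\,\epsilon\}$, symmetrise via $-f$) and then invokes the Gaussian isoperimetric theorem to identify $\eta_{\frac{1}{2}} = \bar{\Psi}$, yielding concentration around the median. In fact you go further than the paper: it explicitly retreats to ``a simple proof with the median'' and never carries out your third stage, the median-to-mean transfer, so on that point your plan is more complete than the text it is being compared against.

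The genuine gap is your final constant-chasing claim. The transfer $|\E f(X) - M| \le \E|f(X)-M| \le \|f\|_l\sqrt{\pi/2}$ shifts the argument of the tail bound and can only \emph{degrade} the exponent: from $\bar{\Psi}(\epsilon/\|f\|_l) \le \frac{1}{2}e^{-\epsilon^2/(2\|f\|_l^2)}$ around the median you get at best a bound like $2e^{-(\epsilon - c\|f\|_l)^2/(2\|f\|_l^2)}$ around the mean, never the stated $2e^{-2\epsilon^2/\|f\|_l^2}$; ``reabsorbing into the exponent'' cannot manufacture a better constant. Nor does your proposed Herbst fallback rescue it: $\log \E\, e^{\lambda(f-\E f)} \le \lambda^2\|f\|_l^2/2$ optimises to $e^{-\epsilon^2/(2\|f\|_l^2)}$, a factor of four off in the exponent from the stated form. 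The deeper point is that the stated inequality is false as written: take $n=1$, $f(x)=x$, so $\|f\|_l=1$, $\E f(X)=0$, and $\mathbb{P}(|f(X)|\ge\epsilon) = 2\bar{\Psi}(\epsilon) \sim \sqrt{2/\pi}\,\epsilon^{-1}e^{-\epsilon^2/2}$, which exceeds $2e^{-2\epsilon^2}$ for all large $\epsilon$. The exponent $-2\epsilon^2/\|f\|_l^2$ in the theorem is evidently a misprint for the standard Borell--Tsirelson--Ibragimov--Sudakov constant $-\epsilon^2/(2\|f\|_l^2)$ (the paper's own proof, which tracks no constants past $\eta_{\frac{1}{2}}=\bar{\Psi}$, is consistent only with the latter). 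Your instinct that ``the exact factor $2$ is where the real care is needed'' was exactly right; the correct resolution was to derive the $(2\|f\|_l^2)^{-1}$ exponent and flag the discrepancy with the statement, rather than assert that either reabsorption or log-Sobolev reaches a constant that no argument can.
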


So in particular, a GAN with a Gaussian prior will not be able to generate any realistic samples even if trained on a "fat tailed" distribution. This is not the first time that concentration of measure gives some strong theoretical limits to machine learning methods see \cite{Fawzi2018} or \cite{Prescott2021} for a more recent paper. Limitations of GANs has also been explored from a different perspective in \cite{Wiese2019}.\\

This theorem is also true when we replace the mean $\E(f(X))$ by a median of $f(X)$. The original proof of the theorem can be found in \cite{Tsirelson1976}. The proof is quite technical, a more accessible one can be found in \cite{Bobkov1997} . To get a sense of the concentration of measure phenomenon we provide here a simple proof with the median.  \\

The gaussian concentration theorem is a consequence of:

\begin{thm}[Gaussian Isoperimetric Theorem \cite{Borell1975}] Let's $A$ be a Borel set in $\R^n$ and $H=\{x\in\R^n \text{  such that }  x_1 < a\}$ with $a\in\R$ such that $\gamma_n(A) = \gamma_n(H)$ then
                $$
                \forall \epsilon\geq 0 \; \gamma_n(A_{\epsilon}) \geq \gamma_n(H_{\epsilon})\text{ .}
                $$
\end{thm}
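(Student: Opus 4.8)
The plan is to deduce the Gaussian isoperimetric inequality from its spherical analogue through Poincar\'e's limit, which is the route underlying the original arguments of Sudakov--Tsirelson and Borell. The starting point is the spherical isoperimetric inequality of L\'evy and Schmidt: on the sphere $S^{N-1}(R)\subset\R^N$ of radius $R$, endowed with its normalized uniform measure $\sigma_{N,R}$, among all Borel sets of a given measure the geodesic caps minimize the measure of the geodesic $\epsilon$-neighbourhood. I would take this spherical statement as the genuine geometric input, since it is itself established by two-point (spherical cap) symmetrization and is where the real combinatorial content lies.

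First I would record Poincar\'e's observation in a quantitative form: if $\theta$ is uniformly distributed on $S^{N-1}(\sqrt{N})$, then the law of its first $n$ coordinates $(\theta_1,\dots,\theta_n)$ converges to $\gamma_n$ as $N\to\infty$, with densities that can be controlled explicitly on every fixed compact set. Under this coordinate projection the half-space $H=\{x_1<a\}$ is the image of the cap $\{\theta_1<a\}$, so that $\gamma_n(H)$ is the limit of $\sigma_{N,\sqrt{N}}(\{\theta_1<a\})$, and the Euclidean $\epsilon$-neighbourhood in $\R^n$ is recovered from the geodesic $\epsilon$-neighbourhood on the sphere because, at scale $\sqrt{N}$, the discrepancy between chordal and geodesic distance vanishes on any fixed compact set.

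Second, I would transfer the inequality. Lift $A$ to the sphere as the preimage $\tilde A$ of $A$ under the projection, and likewise define $\tilde H$; match the radius so that $\sigma_{N,\sqrt{N}}(\tilde A)$ and $\sigma_{N,\sqrt{N}}(\tilde H)$ agree up to $o(1)$; apply the spherical inequality to obtain $\sigma_{N,\sqrt{N}}(\tilde A_\epsilon)\geq\sigma_{N,\sqrt{N}}(\tilde H_\epsilon)$ for the geodesic neighbourhoods; and let $N\to\infty$ so that the two sides converge, respectively, to at least $\gamma_n(A_\epsilon)$ and to $\gamma_n(H_\epsilon)$.

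The main obstacle is precisely this limiting passage. Because the $\epsilon$-neighbourhood is an open operation and the correspondence between geodesic and Euclidean neighbourhoods is only asymptotic, I expect to need a lower-semicontinuity estimate showing that the geodesic $\epsilon$-neighbourhood of $\tilde A$ contains, up to a set of vanishing measure, the preimage of $A_{\epsilon-\delta}$ for every $\delta>0$, and then to let $\delta\to0$. Matching the prescribed measures exactly in the limit, so that caps of fixed spherical measure project onto half-spaces of the prescribed Gaussian measure $\gamma_n(A)$, while keeping the convergence uniform enough that the spherical inequality is not lost, is the delicate point. An alternative that avoids the sphere entirely is Ehrhard's Gaussian symmetrization, which replaces the one-dimensional sections of $A$ by half-lines and argues directly that this operation preserves $\gamma_n$ while not increasing the measure of the $\epsilon$-neighbourhood; there the same obstacle reappears as the claim that the symmetrization step is measure-non-increasing on neighbourhoods.
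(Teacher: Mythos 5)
You should first note that the paper itself offers no proof of this statement: it is quoted as a black box from Borell (and Sudakov--Tsirelson), and the only arguments the paper actually writes out are the lemma on Lipschitz functions and the deduction of the concentration theorem \emph{from} the isoperimetric theorem. So there is no in-paper proof to compare against, and the only question is whether your sketch stands on its own. Your route --- the L\'evy--Schmidt spherical isoperimetric inequality transferred through Poincar\'e's limit --- is indeed the historical argument behind the cited theorem, so the choice of approach is right; but as written the limiting passage, which you correctly single out as the crux, contains a directional error and leaves the one genuinely delicate step unresolved.

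The directional error: from the spherical inequality $\sigma_N(\tilde A_\epsilon)\geq\sigma_N(\tilde H_\epsilon)$ you can conclude $\gamma_n(A_\epsilon)\geq\gamma_n(H_\epsilon)$ only by bounding the left side \emph{above} in the limit, i.e. $\limsup_N\sigma_N(\tilde A_\epsilon)\leq\gamma_n(A_\epsilon)$, together with a lower bound on the right side; you instead assert that both sides converge ``to at least'' the Gaussian quantities, and the lower-semicontinuity estimate you flag as the main obstacle --- that the geodesic $\epsilon$-neighbourhood of $\tilde A$ should \emph{contain} the preimage of $A_{\epsilon-\delta}$ --- is a lower bound on the wrong side, which is never needed. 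Fortunately, the containment you actually need is free: the coordinate projection $p$ is a contraction for the chordal metric and the geodesic distance dominates the chordal one, so $\tilde A_\epsilon\subseteq p^{-1}(A_\epsilon)$ holds exactly, and since the projected densities converge locally uniformly to the Gaussian density, Scheff\'e's theorem upgrades Poincar\'e's limit to total-variation convergence, giving the upper bound for every Borel $A$. The unresolved step is the measure matching, and the clean fix is not to lift $H$ at all: apply the spherical inequality to $\tilde A$ and a cap $C_N$ with exactly $\sigma_N(C_N)=\sigma_N(\tilde A)$, which exists because cap measure is continuous and strictly increasing in the latitude parameter. Since $\sigma_N(\tilde A)\to\gamma_n(A)=\Psi(a)$, the latitudes of $C_N$ converge to $a$; the geodesic $\epsilon$-neighbourhood of a cap is again a cap whose parameter can be computed exactly, $\sqrt{N}\sin\bigl(\arcsin(c_N/\sqrt{N})+\epsilon/\sqrt{N}\bigr)\to a+\epsilon$, whence $\sigma_N\bigl((C_N)_\epsilon\bigr)\to\Psi(a+\epsilon)=\gamma_n(H_\epsilon)$, using a uniform-in-$N$ version of the cap-measure convergence that follows from the same density estimates. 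With these two corrections your outline closes into a complete proof; as submitted, its decisive inequalities point the wrong way and the cap comparison is left open, so it is a plan rather than a proof.
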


It is easily seen that $\gamma_n(H_{r})=\Psi(a+r)$ where $\Psi$ is the cumulative distributive function of the one dimensional standard Gaussian distribution.  \\
It is not obvious at first sight what is the link between this theorem and the Gaussian concentration theorem for Lipschitz functions. The link is made defining the following `isoperimetric function' for $a\in[0,1]$ and $\epsilon>0$

\begin{align*}
\eta_{a}(\epsilon) &= \sup_{A\text{ borel set }\subset\R^n}\{\gamma_n(\bar{A_\epsilon}) \;|\; \gamma_n(A)\geq a\}\\
&= 1 - \inf_{A\text{ borel set }\subset\R^n}\{\gamma_n(A_\epsilon) \;|\; \gamma_n(A)\geq a\}\text{ .}
\end{align*}

\begin{lem}
Let $f\,:\,\R^n \mapsto \R$ be a Lipschitz function and $M$ a median for the Gaussian measure, then
$$
\forall \epsilon>0 \quad \gamma_n(|f-M|>\epsilon) \leq \eta_{\frac{1}{2}} \left( \frac{\epsilon}{||f||_l} \right)\text{ .}
$$
\end{lem}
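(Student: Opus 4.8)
The plan is to reduce the claim to the definition of the isoperimetric function $\eta_{1/2}$ by rewriting the tail events $\{f\ge M+\epsilon\}$ and $\{f\le M-\epsilon\}$ as complements of $\epsilon$-neighbourhoods of sets whose Gaussian measure is at least $\tfrac{1}{2}$. The median supplies exactly such sets: by the definition of $M$, both $A=\{f\le M\}$ and $B=\{f\ge M\}$ satisfy $\gamma_n(A)\ge\tfrac{1}{2}$ and $\gamma_n(B)\ge\tfrac{1}{2}$, so each is an admissible competitor in the supremum defining $\eta_{1/2}$.

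First I would use the Lipschitz bound to turn a neighbourhood of $A$ into a sub-level set of $f$. Setting $t=\epsilon/\|f\|_l$, any $x\in A_t$ admits some $y\in A$ with $d(x,y)<t$, whence $f(x)\le f(y)+\|f\|_l\,d(x,y)<M+\epsilon$. This gives $A_t\subseteq\{f<M+\epsilon\}$, equivalently $\{f\ge M+\epsilon\}\subseteq\overline{A_t}$; the symmetric argument on $B$ yields $\{f\le M-\epsilon\}\subseteq\overline{B_t}$. This geometric inclusion is the heart of the matter, and from here only the definition of $\eta$ as a supremum is needed, not the isoperimetric theorem itself (which serves later to evaluate $\eta_{1/2}$).

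Next I would apply that definition directly. Because $\gamma_n(A)\ge\tfrac{1}{2}$, the set $A$ competes in the supremum for $\eta_{1/2}(t)$, so $\gamma_n(\overline{A_t})\le\eta_{1/2}(t)$, and likewise $\gamma_n(\overline{B_t})\le\eta_{1/2}(t)$. Chaining with the inclusions above produces the two one-sided controls $\gamma_n(f-M\ge\epsilon)\le\eta_{1/2}(\epsilon/\|f\|_l)$ and $\gamma_n(f-M\le-\epsilon)\le\eta_{1/2}(\epsilon/\|f\|_l)$, after which the two-sided event $\{|f-M|>\epsilon\}$ is recovered from its upper and lower tails.

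I expect the final assembly to be the only real obstacle, together with the routine bookkeeping of strict versus non-strict inequalities. Writing $\{|f-M|>\epsilon\}=\{f>M+\epsilon\}\cup\{f<M-\epsilon\}$ and bounding each piece by $\eta_{1/2}(\epsilon/\|f\|_l)$ controls the estimate up to the contribution of the two tails; the linear function $f=\|f\|_l\,x_1$, for which the two tails are equal, is the natural test case to check that the constant standing in front of $\eta_{1/2}$ is sharp and is handled as intended. Everything upstream — the neighbourhood inclusion and the recognition of $\gamma_n(\overline{A_t})$ as an admissible value in the supremum — is elementary once the median is exploited to furnish sets of measure at least $\tfrac{1}{2}$.
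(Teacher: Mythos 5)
Your argument reproduces the paper's own proof essentially step for step: take $A=\{f\le M\}$, use the Lipschitz bound on $x\in A_t$ with $t=\epsilon/\|f\|_l$ to get $A_t\subset\{f<M+\epsilon\}$, hence $\{f\ge M+\epsilon\}\subset\overline{A_t}$, feed $\gamma_n(A)\ge\tfrac{1}{2}$ into the definition of $\eta_{\frac{1}{2}}$ as a supremum to obtain the one-sided bound, and symmetrize (the paper does this by passing to $-f$, you via the set $B=\{f\ge M\}$ --- the same move). You are also right that the isoperimetric theorem plays no role at this stage; it only enters afterwards, to evaluate $\eta_{\frac{1}{2}}$ as $\bar{\Psi}$.

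Your instinct that the final assembly is "the only real obstacle" is correct, and you should have pushed it to its conclusion: the two disjoint tails $\{f>M+\epsilon\}$ and $\{f<M-\epsilon\}$ are each bounded by $\eta_{\frac{1}{2}}(\epsilon/\|f\|_l)$, so their union is bounded by $2\,\eta_{\frac{1}{2}}(\epsilon/\|f\|_l)$ --- and the factor $2$ cannot be removed. Your own test case settles this: for $f(x)=\|f\|_l\,x_1$ with median $M=0$, the left-hand side equals $2\bar{\Psi}(\epsilon/\|f\|_l)$, while the paper's subsequent computation gives $\eta_{\frac{1}{2}}=\bar{\Psi}$, so the constant-$1$ inequality as printed is false. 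The paper's proof contains exactly the same unclosable step: it establishes the two one-sided bounds and then asserts "the result follows," silently dropping the factor $2$. So your proof is complete and correct for the corrected statement $\gamma_n(|f-M|>\epsilon)\le 2\,\eta_{\frac{1}{2}}(\epsilon/\|f\|_l)$, which is all that is needed downstream (it yields a two-sided bound of the form $2e^{-\epsilon^2/(2\|f\|_l^2)}$, consistent with the leading factor $2$ in the concentration theorem as stated); what you stopped short of saying is that no argument can deliver the lemma with constant $1$, because the statement itself is off by the factor your union bound produces.
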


\begin{proof}
    Let $A=\{f\leq M\}$, $\epsilon >0$ and $x\in A_{\epsilon}$ then
    $$
    \exists y\in A \text{ such that } d(x,A)\leq d(x,y) < \epsilon
    $$
    so, $f$ being Lipschitz and $y\in A$
    $$
    |f(x) - f(y)|\leq ||f||_l \; \epsilon\text{ .}
    $$
    So, $f(x) \leq M + ||f||_l\;\epsilon$ i.e. $\{f\geq M + ||f||_l\;\epsilon\} \subset \bar{A_\epsilon}$.
    Changing $\epsilon \rightarrow \frac{\epsilon}{||f||_l}$ we have proved that for any Lipschitz function $f$ of median $M$
    $$
    \forall \epsilon >0 \quad \gamma_n(\{f> M + \epsilon\}) \leq \eta_{\frac{1}{2}}\left(\frac{\epsilon}{||f||_l}\right)\text{ .}
    $$
    Noticing that $f$ if Lipschitz iif $-f$ is, $||f||_l=||-f||_l$, if $M$ is a median of $f$ then $-M$ is a median of $-f$ and applying what we just proved to $-f$ the result follows.
\end{proof}
We can now prove the Gaussian concentration theorem.
\begin{proof}
Let $A$ be a Borel set such that $\gamma_n(A)\geq \frac{1}{2}$, there exists a half-space $H=\R^{n-1} \times ]-\infty, a[$ such that $\gamma_n(A)=\gamma_n(H)=\Psi(a)\geq\frac{1}{2}$. From the Isoperimetric Gaussian Theorem

\begin{align*}
\forall \epsilon\geq 0, \quad & \gamma_n(A_\epsilon)  \geq  \gamma_n(H_\epsilon) \\
& \gamma_n(A_\epsilon)  \geq \Psi(a+\epsilon)\geq \Psi(\epsilon)\text{ .}
\end{align*}

$\Psi$ being non decreasing and  $a\geq 0$ as $\Psi(a)\geq\frac{1}{2}$. Taking the infinum on the left side and noticing that the infinimum is reached for $H$
\begin{multline*}
\forall \epsilon>0\text{,}\\\inf_{A\text{ borel set }\subset\R^n}\{\gamma_n(A_\epsilon) \;|\; \gamma_n(A)\geq \frac{1}{2}\} = \Psi(\epsilon)\text{ .}
\end{multline*}

That is to say,
$$
\eta_{\frac{1}{2}} = 1-\Psi=\bar{\Psi}.
$$
\end{proof}
\section{Limitations through Extreme Value Theory}
In this section, we prove the main result of this paper: given a Lipschitz function and a Gaussian prior $X$, if $f(X)$ is in a domain of attraction of an extreme value distribution of parameter $\xi$ then $\xi\leq 0$. In particular, $f(X)$ can't be "heavy tailed". In fact, we prove the theorem for a wider range of distributions. \\
In the following, we use the notations of Extreme Value Theory that are introduced in \cite{Huster2021}.
\begin{thm}
\label{thm:xibound}
    Let $n \in \R^*$, $f: \R^d \rightarrow \R$ a $\mathcal{C}^1$ a.e Lipschitz function with semi norm $L=||f||_l$. Let $G_k^d$ be a real random variable of probability distribution function $g_{G_k^d}(x) \propto \norm{x}_2^k e^{-\frac{\norm{x}_2^2}{2}}$, where $k\in \N$. If $f(G_k^d)$ is in the domain of attraction of the extreme value distribution of parameter $\xi\in\R$, i.e $f(G_k^d) \in \D(\mathcal{H}_{\xi})$, then $\xi \leq 0$.
\end{thm}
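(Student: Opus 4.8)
The plan is to rule out the Fréchet regime. Recall that the parameter of an extreme value distribution splits the three domains of attraction into the Fréchet case $\xi>0$ (regularly varying, heavy right tail), the Gumbel case $\xi=0$, and the (reversed) Weibull case $\xi<0$ (bounded right endpoint). Since we only need to prove $\xi\le 0$, it suffices to show that $f(G_k^d)$ cannot lie in a Fréchet domain, and the cleanest way to do this is to establish that $f(G_k^d)$ has finite moments of all orders, a property incompatible with $\xi>0$.

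First I would exploit the Lipschitz bound exactly as in Section 2: for every $x$ one has $|f(x)|\le |f(0)| + L\,\norm{x}_2$, so that for all $p\ge 1$,
$$\E\big[|f(G_k^d)|^p\big] \le \E\big[(|f(0)| + L\,\norm{G_k^d}_2)^p\big].$$
Hence it is enough to control the radial variable $R=\norm{G_k^d}_2$. Passing to polar coordinates, the radial symmetry of the density $g_{G_k^d}(x)\propto \norm{x}_2^k e^{-\norm{x}_2^2/2}$ yields a radial density proportional to $r^{d+k-1}e^{-r^2/2}$ (a chi law with $d+k$ degrees of freedom). A one-line integration by parts shows its tail decays like $r^{d+k-2}e^{-r^2/2}$, which is dominated by any Gaussian tail, so $R$, and therefore $f(G_k^d)$, admits finite moments of every order.

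Next I would invoke the characterization of the Fréchet domain from extreme value theory: $Y\in\D(\mathcal{H}_\xi)$ with $\xi>0$ if and only if its right tail $u\mapsto \mathbb{P}(Y>u)$ is regularly varying with index $-1/\xi$, and any such regularly varying tail forces $\E[(Y^+)^p]=+\infty$ for every $p>1/\xi$. Applying this to $Y=f(G_k^d)$: if $\xi>0$ held, then some positive-order moment of $f(G_k^d)$ would diverge, contradicting the finiteness of all moments established above. Therefore $\xi\le 0$.

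The main obstacle is the extreme value theory input rather than the calculus: one must quote the correct equivalence characterizing the Fréchet domain together with the associated moment criterion for regularly varying tails, and be careful that only the positive part $Y^+$ and the upper tail are relevant. Note that the Lipschitz estimate supplies only an upper bound on the tail of $f(G_k^d)$; this is precisely what the moment route needs, whereas a direct tail-ratio argument through regular variation would also require a matching lower bound that is not available in general. The radial tail asymptotics of $R$ only need to be good enough to guarantee finiteness of moments, so no sharp constants are required.
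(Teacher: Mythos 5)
Your proof is correct, but it takes a genuinely different and substantially more elementary route than the paper. The paper argues by contradiction through the Balkema--de Haan conditional excess moment theorem: assuming $\xi>0$, it computes $\lim_{t\to\infty}\E\bigl[(f(G_k^d)/t)^\gamma \mid f(G_k^d)>t\bigr]$ directly, decomposing $f^{-1}(]t,\infty[)$ into countably many intervals, integrating by parts against the density $|x|^k e^{-x^2/2}$ (this is where the $\mathcal{C}^1$ a.e.\ hypothesis and the bound $|f'|\le L$ enter), and reducing the case $d>1$ to $d=1$ via hyperspherical coordinates, where the radial slices $f_\theta$ are again $L$-Lipschitz; the computation shows the limit equals $1$, contradicting the value $(1-\gamma/\xi)^{-1}$ that the domain-of-attraction hypothesis forces. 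You instead bypass all of this: the crude Lipschitz bound $|f(x)|\le |f(0)|+L\norm{x}_2$ together with the chi-type radial density $\propto r^{d+k-1}e^{-r^2/2}$ gives finiteness of all moments of $f(G_k^d)$, while Gnedenko's characterization of the Fr\'echet domain (tail regularly varying with index $-1/\xi$) plus the standard Karamata/Potter moment criterion makes $\E[(Y^+)^p]=\infty$ for $p>1/\xi$ whenever $\xi>0$ --- a clean contradiction. Your version has two concrete advantages: it never uses the $\mathcal{C}^1$ a.e.\ assumption (pure Lipschitz continuity suffices, so your theorem is slightly more general), and it treats all dimensions $d$ uniformly with no coordinate decomposition; you are also right that only an upper tail bound is needed, which is exactly what the Lipschitz estimate provides. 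What the paper's longer computation buys is finer quantitative information --- it shows the conditional moments $\E[(f(G_k^d)/t)^\gamma\mid f(G_k^d)>t]$ tend to $1$, a rapid-variation-type statement about the tail that is stronger than mere exclusion of $\xi>0$ --- and it stays self-contained in the conditional-excess framework of Huster et al.\ that the paper builds on. One small point of care on your side: your two EVT inputs (the regular-variation characterization and the moment divergence, with threshold $1/\xi$, not $\xi$) should be quoted with precise references, since they carry the entire weight of the contradiction; you have stated both correctly.
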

\begin{proof}
\textbf{Case $d=1$.}
We prove by contradiction that $\xi \leq 0$. Supposing that $\xi > 0$, by theorem 8.a \cite{Balkema1974}, $\forall \,\gamma\in]0, \xi[$, $\E[f(G_k^1)^\gamma]$ is finite and 
\begin{align*}
c_\gamma &= \lim_{t \rightarrow \infty} \E \left[ \left( \frac{f(G_k^1)}{t} \right)^\gamma | f(G_k^1) > t\right] \\
&= \left(1 - \frac{\gamma}{\xi} \right)^{-1}\text{ .}
\end{align*}
Let $\gamma \in ]0,\xi[$. We are only interested in the behaviour of the previous integral when $t$ goes to $+\infty$ so we can suppose that $t > f(0) + 1$ and $t > \sqrt{|k-1| + \gamma}L + |f(0)|$. \\
$f^{-1}(]t,\infty[)$ is an open set of $\R$ by continuity.
Open intervals are a countable base of $\R$, so we can write $f^{-1}(]t,\infty[) = \bigcup_{i \in \mathcal{I}} ]a_i,b_i[$ where $\mathcal{I}$ is finite or countable. We can also suppose that any of those intervals are disjoints. So we can suppose that :
\begin{itemize}
\item[-] $f^{-1}(]t,\infty[) = \bigcup_{i \in \mathcal{I}} ]a_i,b_i[$ where $\mathcal{I}$ is finite or countable
\item[-] $0 \notin ]a_i, b_i[$, $a_i \neq 0$, $b_i \neq 0$ and $f$ is strictly positive on each interval
\item[-] $-\infty\leq a_0 < b_0 \leq a_1 < b_2 \leq \ldots < b_{m^*} \leq + \infty$, where $m^*$ is equal to  $m$ or $\infty$ according to $\mathcal{I}$ cardinality
\item[-] $\forall i \in \mathcal{I}\; f(a_i)=t$ if $a_i > -\infty$ by continuity of $f$
\item[-] $\forall i \in \mathcal{I}\; f(b_i)=t$ if $b_i < \infty$ by continuity of $f$
\item[-] $\forall x \in ]a_i, b_i[\; |x|> \frac{t-f(0)}{L}$ as $f$ is $L$-Lipschitz and $]a_i, b_i[ \subset f^{-1}(]t, +\infty[$. In particular, noting $t^*=\min(t, t-f(0))$, $|x|> \frac{t^*}{L}$
\end{itemize}
Also, we are only interested in $t\to +\infty$ so we can suppose $t^{*^2}>|k-1|L^2$.
If $\mathcal{I} = \emptyset$, the case is trivial: $\E \left[ \left( \frac{f(G_k^1)}{t} \right)^\gamma | f(G_k^1) > t\right]$ is not defined, which is a contradiction. \\
Otherwise, the conditional expectation is well defined and finite and we have:
\begin{multline} \label{expi}
    \E \left[ \left( \frac{f(G_k^1)}{t} \right)^\gamma | f(G_k^1) > t\right] = \\ \frac{\sum_{i \in \mathcal{I}} \int_{a_i}^{b_i} \left(\frac{f(x)}{t}\right)^\gamma |x|^n e^{-\frac{x^2}{2}}dx}{\sum_{i \in \mathcal{I}} \int_{a_i}^{b_i} |x|^n e^{-\frac{x^2}{2}}dx}\text{ .}
\end{multline}
Let $i \in \mathcal{I}$. For the numerator, integrating by part:
\begin{multline*} 
    \int_{a_i}^{b_i} \left(\frac{f(x)}{t}\right)^\gamma |x|^k e^{-\frac{x^2}{2}}dx = \\
    \left[-\frac{|x|^{k}}{x} \left(\frac{f(x)}{t}\right)^\gamma e^{-\frac{x^2}{2}} \right]_{a_i}^{b_i} + \\
    \int_{a_i}^{b_i} \underbrace{\left(\frac{k-1}{x^2} +  \gamma\frac{f'(x)}{xf(x)} \right)}_{M}\left(\frac{f(x)}{t}\right)^\gamma |x|^k e^{-\frac{x^2}{2}}dx
\end{multline*}
The first integrated part is equal to $ \left[-\frac{|x|^{k}}{x}  e^{-\frac{x^2}{2}} \right]_{a_i}^{b_i}$, as we have seen on the interval bounds either $f$ is equal to $t$ or $f=O_{\pm\infty}(x)$.
We can bound the $M$ term as $|x|\geq\frac{t^*}{L}$, $f(x)>t>0$ and $|f'|<L$,
$$
|M|\leq |k-1| \frac{L^2}{t^{*^2}}+\gamma \frac{L^2}{t^{*^2}}\text{ .}
$$
We deduce the following inequalities: \\
\begin{multline}  \label{numleq}
    \int_{a_i}^{b_i} \left(\frac{f(x)}{t}\right)^\gamma |x|^k e^{-\frac{x^2}{2}}dx \leq \\
    \left[-|x|^{k-1} e^{-\frac{x^2}{2}} \right]_{a_i}^{b_i} + \\
    \frac{(|k-1| + \gamma)L^2}{t^{*^2}} \int_{a_i}^{b_i} \left(\frac{f(x)}{t}\right)^\gamma |x|^k e^{-\frac{x^2}{2}}dx\text{ ,}
\end{multline}
\begin{multline}  \label{numgeq}
    \int_{a_i}^{b_i} \left(\frac{f(x)}{t}\right)^\gamma |x|^k e^{-\frac{x^2}{2}}dx  \geq \\
    \left[-|x|^{k-1} e^{-\frac{x^2}{2}} \right]_{a_i}^{b_i} - \\
    \frac{(|k-1| + \gamma)L^2}{t^{*^2}}  \int_{a_i}^{b_i} \left(\frac{f(x)}{t}\right)^\gamma |x|^k e^{-\frac{x^2}{2}}dx\text{ .}
\end{multline}
The denominator has still a dependance on $f$ through the domain of integration, so $|x|> \frac{t^*}{L}$ is still valid and similarly:
\begin{multline}  \label{denumleq}
    \int_{a_i}^{b_i} |x|^k e^{-\frac{x^2}{2}}dx  \leq \left[-|x|^{k-1} e^{-\frac{x^2}{2}} \right]_{a_i}^{b_i} +\\
    \frac{|k-1|L^2}{t^{*^2}} \int_{a_i}^{b_i} |x|^k e^{-\frac{x^2}{2}}dx\text{ ,}
\end{multline}
\begin{multline}  \label{denumgeq}
    \int_{a_i}^{b_i} |x|^k e^{-\frac{x^2}{2}}dx  \geq \left[-|x|^{k-1} e^{-\frac{x^2}{2}} \right]_{a_i}^{b_i} - \\
    \frac{|k-1|L^2}{t^{*^2}} \int_{a_i}^{b_i} |x|^k e^{-\frac{x^2}{2}}dx\text{ .}
\end{multline}
Combining equations (\ref{numgeq}) and (\ref{denumleq}) in (\ref{expi}), we have:
\begin{equation*}
    \E \left[ \left( \frac{f(G_k^1)}{t} \right)^\gamma | f(G_k^1) > t\right] \geq \frac{1 - \frac{|k-1|L^2}{t^{*^2}}}{1 + \frac{(|k-1| + \gamma)L^2}{t^{*^2}}}\text{ .}
\end{equation*}
And combining (\ref{numleq}) and (\ref{denumgeq}) in (\ref{expi}), as we chose $t > \sqrt{|k-1| + \gamma}L + |f(0)|$, we have:
\begin{equation*}
    \E \left[ \left( \frac{f(G_k^1)}{t} \right)^\gamma | f(G_k^1) > t\right] \leq \frac{1 + \frac{|k-1|L^2}{t^{*^2}}}{1 - \frac{(|k-1| + \gamma)L^2}{t^{*^2}}}
\end{equation*}
So $c_\gamma = \lim_{t \rightarrow \infty} \E \left[ \left( \frac{f(G_k^1)}{t} \right)^\gamma | f(G_k^1) > t\right]= 1$. \\
Assuming $f(G_k^1) \in \D(\mathcal{H}_\xi), \xi >0$, entails $c_\gamma = (1- \frac{\gamma}{\xi})^{-1}$ and $\gamma=0$.
We conclude that $\xi \leq 0$.

\textbf{Case $d\in \N^*$.}
We prove that $\xi\leq 0$ by contradiction. If $\xi > 0$ using theorem 8.a \cite{Balkema1974}, $\forall 0 < \gamma < \xi$, $\E[f(G_k^d)^\gamma]$ is finite and
\begin{align*}
c_\gamma &= \lim_{t \rightarrow \infty} \E \left[ \left( \frac{f(G_k^d)}{t} \right)^\gamma | f(G_k^d) > t\right] \\
&=\left(1 - \frac{\gamma}{\xi} \right)^{-1}\text{ .}
\end{align*}
Let $\gamma \in ]0,\xi[$ and $t \in \R_{+}^*$ such that $t > f(0_d)$ and $t > L\sqrt{|k+d-2| + \gamma}L + |f(0)|$. Using the hyperspherical coordinates, we introduce the operator $H: \mathcal{L}([0,\pi]^{d-2} \times [0,2\pi]) \mapsto \R$:
$$
    H(f) =  \alpha \int_0^{\pi} ... \int_0^{\pi} \int_0^{2\pi} \prod_{i=1}^{d-2} \sin(\theta_{i})^{d-i-1}  f(\theta) d\theta\text{,}
$$
with $\alpha$ the normalising term for the $G_k^d$ distribution and $d\theta=d\theta_1 ... d\theta_{d-1}$. \\
Then, we have:
\begin{multline*}
    \E \left[ \left(\frac{f(G_k^d)}{t}\right)^\gamma 1_{f(G_k^d)>t} \right] = \\ H\Bigl(\theta \mapsto  \underbrace{ \int_0^{+\infty} 1_{f_\theta(r)>t} \left(\frac{f_\theta(r)}{t}\right)^\gamma r^{k+d-1} e^{-\frac{r^2}{2}} dr}_{\E\left [\left(\frac{f_\theta(G_{k+d-1}^1)}{t}\right)^\gamma 1_{f_\theta(G_{k+d-1}^1)>t}\right]}\Bigr)
\end{multline*}
with $x = r (x_1,..., x_d)$ and
\begin{align*}
x_1 &= \sin(\theta_1) \\
x_2 &= \sin(\theta_1) \cos(\theta_2) \\
&\vdots\\
x_{d-1}&=\sin(\theta_1) \sin(\theta_2) \ldots \cos(\theta_{d-1}) \\ 
x_{d}&=\sin(\theta_1) \sin(\theta_2) \ldots \sin(\theta_{d-1})\text{ .}
\end{align*}
For $\theta=(\theta_1,...\theta_{d-1})$, $f_\theta: r \in \R_+ \mapsto f(rx_1,...,rx_d)$ is $L$-Lipschitz as $(x_1, \ldots, x_d)$ is on the unit sphere. Also, $f(0)= f_\theta(0)$. We can use the bounds from the 1-dimensional proof. We note:
$$
    M_+ = \frac{1 + \frac{|k+d-2|L^2}{t^{*^2}}}{1 - \frac{(|k+d-2| + \gamma)L^2}{t^{*^2}}}
$$
$$
    M_- = \frac{1 - \frac{|k+d-2|L^2}{t^{*^2}}}{1 + \frac{(|k+d-2| + \gamma)L^2}{t^{*^2}}}\text{ .}
$$
We obtain:
\begin{multline*}
    \E \left[ \left(\frac{f(G_k^d)}{t}\right)^\gamma 1_{f(G_k^d)>t} \right] \leq \\ M_+ H\left(\theta \mapsto \mathbb{P}(1_{f_\theta(G_{k+d-1}^1)>t}) \right)\text{ .}
\end{multline*}
That is to say,
\begin{multline*}
 \E \left[ \left(\frac{f(G_k^d)}{t}\right)^\gamma 1_{f(G_k^d)>t} \right] \leq 
  M_+ \mathbb{P}(f(G_k^d) > t)\text{ .}
\end{multline*}
\normalsize
Similarly, we have:
\begin{multline*}
    \E \left[ \left(\frac{f(G_k^d)}{t}\right)^\gamma 1_{f(G_k^d)>t} \right]  \geq M_- \mathbb{P}(f(G_k^d) > t)\text{ .}
\end{multline*}
Thus, we conclude similarly that $c_\gamma$ is well-defined, finite, and $c_\gamma = 1$ that is $\gamma = 0$, which is absurd as $\gamma>0$.
\end{proof}
\section{Conclusion and future work}
Because of the intrinsic Lipschitz characteristics of Neural Networks, GANs expressivity is limited. In particular, a Gaussian prior cannot be used to simulate heavy tailed distributions. In the EVT framework, the question of the existence of a tail index for the generated distribution, or the conditions for its existence, remains. A theoretical partial answer is given in \cite{Huster2021} for GANs with ReLU or Leaky-ReLU activation functions and a finite number of neurons. The general case is still an open question. Likewise, determining the thinnest tail prior being able to simulate samples exhibiting heavy tails is an important question needing further investigations.\\
Moreover, experimentally, the problem of training GANs with a heavy-tailed prior remains too. Indeed, with such priors GANs are hard to train and exhibit numerical instabilities.

\newpage
\bibliographystyle{plain}
\bibliography{ganlimit}

\end{document}